\newtheorem{assumption}{Assumption}
\newtheorem{lemma}{Lemma}
\newtheorem{remark}{Remark}
\title{\LARGE \bf
	Compliant Hierarchical Control for Arbitrary\\
	Equality and Inequality Tasks with Strict and Soft Priorities
}
\author{Gianluca Garofalo% <-this % stops a space
\thanks{Gianluca Garofalo is with ABB Corporate Research, 72178 Västerås, Sweden.
        {\tt\small gianluca.garofalo(at)se.abb.com}}%
}
\begin{document}

\maketitle
\thispagestyle{empty}
\pagestyle{empty}

\begin{abstract}
When a robotic system is redundant with respect to a given task, the remaining degrees of freedom can be used to satisfy additional objectives.
With current robotic systems having more and more degrees of freedom, this can lead to an entire hierarchy of tasks that need to be solved according to given priorities.
In this paper, the first compliant control strategy is presented that allows to consider an arbitrary number of equality and inequality tasks, while still preserving the natural inertia of the robot.
The approach is therefore a generalization of a passivity-based controller to the case of an arbitrary number of equality and inequality tasks.
The key idea of the method is to use a Weighted Hierarchical Quadratic Problem to extract the set of active tasks and use the latter to perform a coordinate transformation that inertially decouples the tasks.
Thereby unifying the line of research focusing on optimization-based and passivity-based multi-task controllers.
The method is validated in simulation.
\end{abstract}

\section{Introduction}
\label{sec:intro}%

The development of complex robotic systems, such as autonomous mobile manipulators and humanoid robots, has led to the design of several control strategies to simultaneously accommodate multiple task-space objectives.
These multiple objectives can be taken into account by the controller because of the kinematic redundancy of the system with respect to a single task-space objective.
By repeatedly applying this concept, one can obtain an entire hierarchy of tasks that need to be solved according to given priorities.
Ordering the task by priority levels ensures that possible conflicts between the tasks can be resolved by relying on their relative importance.

Several works can be found in the literature that address the problem of multi-task control.
A first classification can be made based on the allowed interference between tasks.
Soft-priority methods consider a weighted combination of the control actions required for the individual tasks \cite{Salini2011, Moro2013, Dehio2015, Bouyarmane2018, Englsberger2020}, while strict-priority methods guarantee that there is no interference on a task from any of those with a lower priority level \cite{Siciliano1991, Escande2014, Ott2015, Garofalo2020}.
Another classification criteria is whether the method is based on optimization \cite{Kanoun2011, Escande2014, Herzog2016, Agravante2017, Fahmi2019} or if it computes the control law in closed form \cite{Ott2015, Englsberger2020, Garofalo2020, Wu2023}.
The latter methods are applied to cases with equality tasks only, but have the advantage of having well-analyzed stability properties.
Finally, there are methods based on the well-known Operational Space Formulation \cite{Khatib1987} and other that rely on passivity-based compliance control \cite{Ott2008}.
The first class of methods fully decouples the closed-loop dynamics via feedback-linearization-based actions \cite{Sentis2005, Righetti2011, DelPrete2015}, while the second family of controllers aims at avoiding any reshaping of the individual task-space inertias \cite{Ott2015, Dehio2019, Garofalo2020, Wu2023}.
Feedback linearization typically comes with an easier design phase and leads to decoupled closed-loop linear systems that are also easier to analyze.
Nevertheless, the controller might be more sensitive to modeling errors and parameter variations.
This is exactly the opposite of what happens when the inertia of the system is not reshaped and only recently arbitrary task levels and dimensions could be considered \cite{Garofalo2020}.

Despite these differences, often modifications of a method allow to switch from one class to the other.
Optimization-based methods can easily consider both soft and strict priorities by modifying the cost function and the constraints used by the solver.
This can be similarly achieved via appropriate selection of the damping factor in a damped pseudo-inverses, if only equality tasks are considered.
Also, it is well known that passivity-based methods typically specialize into the operational-space counterpart if one allows to reshape the inertia of each task.
Nevertheless, the possibility of having inequality tasks was so far a prerogative of optimization-based methods with inertia reshaping.

The main contribution of this work is the derivation of a passivity-based hierarchical controller with arbitrary equality and inequality tasks of any dimensions.
In other words, there are no restrictions on how many tasks the users can provide, on whether or not each task is represented via an equality or an inequality and finally on what is the dimension and rank of each task.
The key idea of the method is to extract the set of active tasks computed during the numerical solution of a Weighted Hierarchical Quadratic Problem (WHQP) and utilize them to compute a coordinate transformation needed to design passivity-based hierarchical controllers.
This has been achieved via the following steps:
\begin{itemize}
\item Modification of the Hierarchical Quadratic Problem (HQP) \cite{Escande2014} via weighting matrices (this ensures the inertia decoupling at a later step and allows to have soft priorities between tasks within the same level);
\item Differentiation of the Complete Orthogonal Decomposition (COD);
\item Selection of a coordinate transformation via the set of active tasks and the COD (and its differentiation);
\item Design of the control law in the transformed space.
\end{itemize}

The paper is organized as follows.
Section~\ref{sec:pre} recalls useful definitions and introduces the notation.
The newly proposed WHQP is presented in Section~\ref{sec:hqp}.
Section~\ref{sec:model} describes the considered systems and defines the control objective.
The control design is carried out in Section~\ref{sec:control} and validated in simulations in Section~\ref{sec:validation}.
Finally, Section~\ref{sec:conclusion} summarizes the work.
While these sections contain the main results, all the technical derivations are collected in the Appendices.

\section{Preliminaries}
\label{sec:pre}%

\subsection{Weighted Moore-Penrose Inverse}
Given a matrix ${ A \in \mathbb{R}^{m \times n} }$ and two symmetric positive definite matrices ${ W_1 \in \mathbb{R}^{m \times m} }$ and ${ W_0 \in \mathbb{R}^{n \times n} }$, the following problem for an unknown matrix ${ X \in \mathbb{R}^{n \times m} }$:
\begin{subequations}
	\begin{align}
		& A X A = A \\
		& X A X = X \\
		& (W_1 A X)^\top = W_1 A X \\
		& (W_0 X A)^\top = W_0 X A
	\end{align}%
	\label{eq:def_generalized_inverse}%
\end{subequations}%
has a unique solution ${ X = A^{\dagger_{_{W_1,W_0}}} }$ called the Weighted Moore–Penrose Inverse (WMPI) of $A$ \cite{Ben-Israel, Doty1993}.
Notice that $A$ is not assumed to have full rank and therefore ${ A^{\dagger_{_{W_1,W_0}}} }$ cannot be computed as a left or right weighted inverse.
Instead, one can compute ${ A^{\dagger_{_{W_1,W_0}}} }$ as
\begin{align}
	A^{\dagger_{_{W_1,W_0}}} = R_0^{-1} \hat A^\dagger R_1
\end{align}
where $\hat A^\dagger$ is the classic Moore–Penrose Inverse (MPI) of ${ \hat A = R_1 A R_0^{-1} }$ and the Cholesky decomposition of the weight matrices are ${ W_1 = R_1^\top R_1 }$ and ${ W_0 = R_0^\top R_0 }$ \cite{Garofalo2020}.
In turn, $\hat A^\dagger$ can be conveniently computed from the compact\footnote{A Complete Orthogonal Decomposition (COD) will be referred to as compact if it considers only the non-zero blocks of the original COD.} Complete Orthogonal Decomposition (COD) of ${ \hat A = U L Y^\top }$ as ${ \hat A^\dagger = Y L^{-1} U^{\top} }$.

\subsection{Notation}
A simplifying notation is introduce to denote stacks of matrices or vectors.
The whole stack of vectors ${ x_k \in \mathbb{R}^{m_k} }$ for ${ k \in \{1,\ldots,r\} }$ will be denoted by suppressing the subscript and organized in a column array, i.e., ${ x = \begin{bmatrix} x_1^\top & \ldots & x_r^\top \end{bmatrix}^\top \in \mathbb{R}^m }$, with ${ m = \sum_{k=1}^r m_k }$.
Moreover, the partial stack of the first $k$-th elements will be denoted as $\mathbf x_k$.
Therefore, ${ \mathbf x_1 = x_1 }$ and ${ \mathbf x_r = x }$.

The simplified notation ${ A_k^+ }$ will be used in place of ${ A_k^{\dagger_{_{W_k,M}}} }$ with $M$ being the joint inertia matrix of the robot and $W_k$ the weight matrix used at level $k$.
Moreover, for a stack of matrices $A$ it is easy to show that ${ A^+ = \begin{bmatrix} A_1^+ & \ldots & A_r^+ \end{bmatrix} }$ is the WMPI of $A$ with weight matrices $W$ and $M$, being $W$ the block-diagonal matrix whose blocks are $W_k$.
This is easily verified by showing that $A^+$ so defined satisfies \eqref{eq:def_generalized_inverse}.

Finally, $E$ and $O$ will always denote the identity matrix and a matrix of zeros, with dimensions given by the context.

\subsection{Projected Stack}
Given a stack of matrices ${ A \in \mathbb{R}^{m \times n} }$, the dynamically consistent stack ${ \bar A }$ is computed via the recursion
\begin{align}
	& \bar A_k = A_k \, P_{k-1}
	\label{eq:GSP-1}%
	\\
	& P_k = P_{k-1} \, \br[r]{ E - \bar A_k^+ \bar A_k }
	\label{eq:GSP-2}%
\end{align}%
for ${ k \in \br[c]{1,\ldots,r} }$ and with $P_0 = E$.
A consequence of the definition is that the first matrix in the stack is unchanged.
Moreover, the matrices $P_k$ and $\bar A_k$ satisfy the properties
\begin{align}
	& P_k \, P_k = P_k
	\label{eq:PP}%
	\\
	& P_k^\top \, M = M \, P_k
	\label{eq:PT}%
	\\
	& P_k \, P_j = P_j \, P_k = P_k
	\label{eq:P3}%
	\\
	& A_j \, \bar A_k^+ = 0
	\label{eq:bJpbJ}%
\end{align}%
for all ${ j < k }$.
The first two qualify $P_k$ as an orthogonal projection matrix with metric $M$ and \eqref{eq:P3} additionally guarantees that each matrix in the stack is projected in the nullspace of all the previous levels.
Due to the recursive nature of the definitions, these properties were proved in \cite{Garofalo2020} by induction.

\begin{remark}
The iterations \eqref{eq:GSP-1}, \eqref{eq:GSP-2} are a generalization of the Gram–Schmidt process used to compute the LQ decomposition of a matrix without normalization of the factor $Q$.
Indeed, ${ A \bar A^+ }$ is a block lower-triangular due to \eqref{eq:bJpbJ}.
\end{remark}

\section{Weighted Hierarchical Quadratic Problem}
\label{sec:hqp}%

The goal of this section is twofold: recalling the results in \cite{Escande2014} using the notation of this paper, while modifying them to allow a weighted version of the problem.
In particular, while in \cite{Escande2014} a MPI was considered for each of the matrix appearing in the constraints of the problem, here a projected stack via WMPI will be created from the matrices.
Therefore, it will be referred to as Weighted Hierarchical Quadratic Problem (WHQP).
This has two main consequences:
\begin{itemize}
\item soft priorities are possible between tasks within the same priority level,
\item an inertially decoupled coordinate transformation can be identified in Section~\ref{sec:control}.
\end{itemize}

The Hierarchical Quadratic Problem (HQP) introduced in \cite{Escande2014} solves efficiently a sequence of QP problems that incorporate equality or inequality constraints at any levels, while respecting the task priorities.
The sequence of optimal objectives is minimal in the lexicographic sense: it is not possible to decrease an objective without increasing all those with higher priority.
In the WHQP, each objective is weighted via the symmetric positive-definite matrix $W_k$, i.e., 
\begin{subequations}
\begin{align}
	& \underset{x, w_1, \ldots, w_r}{\textrm{lex min}}	& & \hspace*{-1.2cm} \br[c]{ \frac{1}{2} w_1^\top W_1 w_1, \ldots, \frac{1}{2} w_r^\top W_r w_r } \\
	& \text{ s.t.}			& & \hspace*{-1.2cm} A_{k} \, x \geq b_k + w_k
\end{align}%
	\label{eq:hqp}%
\end{subequations}
for ${ k \in \br[c]{1,\ldots,r} }$, where $w_k$ are slack variables that can relax the constraint in case of infeasibility and the notation with only lower bounds encompasses upper bounds, double bounds and equalities.
When the matrix $A_k$ is itself a stacked matrix and $W_k$ is block-diagonal, this formulation allows to have soft priorities between tasks at the same level $k$.
To solve problem \eqref{eq:hqp}, a modification of the hierarchical active search method proposed in \cite{Escande2014} is presented.

As in \cite{Escande2014}, a key step of the algorithm is the solution of the QP for the $k$-th priority level in the form:
\begin{subequations}
\begin{align}
	& \min_{x^{(k)}, w_k}	& & \hspace*{-1.7cm} \frac{1}{2} w_k^\top W_k w_k \\
	& \text{ s.t.}			& & \hspace*{-1.7cm} \mathbf{\bar A}_{k-1} \, x^{(k)} = \mathbf{\bar A}_{k-1} x^{(k-1)} \\
	&						& & \hspace*{-1.7cm} A_k \, x^{(k)} = b_k + w_k
\end{align}%
\label{eq:WHQP_k}%
\end{subequations}
The first constraint guarantees that the solution of the problem at step $k$, denoted by $x^{(k)}$, does not affect the previous tasks at levels prior to $k$.
The optimal solution for the primal and dual variables is found in Appendix \ref{sec:kkt} to be:
\begin{align}
	& x^{(k)} = \mathbf{\bar A}^{+}_k \, \mathbf{v}_k \\
	& w_k = - \big( E - \bar A_k \bar A^+_k \big) v_k \\
	& \lambda_k = W_k w_k \\
	& \mathbf{\lambda}_{k-1} = - \mathbf{\bar A}^{+ \top}_{k-1} A^\top_k W_k w_k
\end{align}%
where ${ v_k = b_k - A_k \mathbf{\bar A}_{k-1} \mathbf v_{k-1} }$ and ${ \mathbf v_{k-1} }$, ${ \mathbf{\lambda}_{k-1} }$ are to be considered void for ${ k = 1 }$.

In presence of inequality constraints, one needs in addition that the corresponding dual variables $\lambda$ are non-positive.
This is guaranteed by Alg.~\ref{alg:has}, so that one obtains the optimal solution of \eqref{eq:hqp}.
The algorithm relies on the solution of a WHQP with only equality constraints every time the active set $\mathcal{A}$ changes.
Both the values of the primal variables $x$ and dual variables $\lambda$ at the optimum are needed, as well as the index $\eta$ of the last task in the stack that contributes to the solution $x$.
The primal and dual variables are returned by the algorithms $eWHQP_{primal}$ and $eWHQP_{dual}$, respectively.
In contrast, in \cite{Escande2014}, the algorithms $eHQP_{primal}$ and $eHQP_{dual}$ were used since the problem was not weighted.
Moreover, in \cite{Escande2014} these algorithms were using the COD to compute the MPI.
Appendix~\ref{sec:compact} shows how the WMPI needed for the optimal solution can be computed using the COD iterations in the exact analogous way as in \cite{Escande2014}.

Exactly as in \cite{Escande2014}, Alg.~\ref{alg:has} also keeps track of the locked set $\mathcal{L}$ (and its complement $\mathcal{U}$), i.e., the set of inequalities that cannot be removed from $\mathcal A$ because are essential to obtain the minimum value of objectives with higher priorities.
Finally, $\mathcal{E}$ and $\mathcal{I}$ are the constant sets of equality and inequality tasks.
\begin{algorithm}
\caption{Weighted Hierarchical Active Search}
\label{alg:has}%
\begin{algorithmic}[1]
\State $\mathcal{A} \gets \mathcal{E}$
\State $\mathcal{L} \gets \emptyset$
\State $x \gets 0$
\While{$h \leq r$}
	\Repeat
	\State $(x, \eta) \gets$ \Call{$eWHQP_{primal}$}{$\mathcal{A}$}
	\ForAll{$k \in \mathcal{I} : k \leq \eta$ and $A_k x < b_k$}
		\State $\mathcal{A} \gets \mathcal{A} \cup \{k\}$
	\EndFor
	\Until{not $\mathcal{A}$ is new}
	\Repeat
		\State $\mathcal{W} \gets \mathcal{A} \cap \mathcal{I} \cap \mathcal{U}$
		\State $\lambda \gets$ \Call{$eWHQP_{dual}$}{$h, \mathcal{W}$}
		\ForAll{$k \in \mathcal{W} : k \leq \eta$}
			\If{$\lambda_k > 0$}
				\State $\mathcal{L} \gets \mathcal{L} \cup \{k\}$
			\Else
				\State $\mathcal{A} \gets \mathcal{A} \setminus \{k\}$
			\EndIf
		\EndFor
	\Until{$h \leq r$ and $\mathcal{A}$ is new}
\EndWhile
\State \Return $x$
\end{algorithmic}
\end{algorithm}

The reader is referred to \cite{Escande2014} for the optimality and convergence analysis of the algorithm, as well as all for further details, because it will be shown in the next subsection that with minor changes the Weighted Hierarchical Active Search can be obtained from the unweighted counterpart in \cite{Escande2014}.

\subsection{Efficient Computation via COD}
Appendix~\ref{sec:compact} shows how to use the COD and Cholesky decomposition to compute each matrix of the projected stack and corresponding WMPI needed to obtain the solution of Alg.~\ref{alg:has}.
From the expressions in Appendix~\ref{sec:compact}, it is further quite easy to realize that only two modifications need to be made in order to compute the solution of a WHQP compared to the original HQP.
Let $R_0$, $R_k$ be the Cholesky factor of $M$ and $W_k$ respectively, i.e., ${M = R_0^\top R_0}$ and ${W_k = R_k^\top R_k}$ and recall that $Z_k$ is computed from the factors of each COD, see \cite{Escande2014} and Appendix~\ref{sec:compact}.
Then firstly, $Z_0$ needs to be initialized to $R_0^{-1}$ instead of the identity as in \cite{Escande2014}.
Secondly, each $A_k$, $b_k$ need to be pre-multiplying by the corresponding Cholesky factor $R_k$.
With these two modifications in place, the implementation of the WHQP and the regular HQP is exactly identical.
More specifically, the variables ${ R_k A_k }$, ${ R_k b_k }$, ${ R_k w_k }$, ${ R_k v_k }$ and ${ R_k^{-\top} \lambda_k }$ in this paper correspond to $A_k$, $b_k$, $w_k$, $v_k$ and $\lambda_k$ appearing in the algorithms in \cite{Escande2014}.
This correspondence will be assumed for the remainder of this paper.
With an abuse of notation following this correspondence, it is shown in Appendix~\ref{sec:compact} that the solution requires the computation of the WMPI as
\begin{align}
& \bar A_k^+ = Z_{k-1} Y_k L_k^{-1} U_k^\top
\end{align}
where $U_k$, $L_k$ and $Y_k$ are the factors of the compact COD of $A_k Z_{k-1}$, with
\begin{subequations}
\begin{align}
	& Z_k = Z_{k-1} \tilde Z_k \\
	& Z_0 = R_0^{-1}
\end{align}%
\label{eq:Z_recursion}%
\end{subequations}
for ${ k \in \br[c]{1,\ldots,r} }$, $\tilde Z_k$ the complement of $Y_k$ in the COD, i.e, ${Y_k^\top \tilde Z_k = 0}$, while ${Z_k^\top M Z_k = E}$.

\subsection{Summary}
The WHQP generalizes the regular HQP, in the same sense that the WMPI generalizes the MPI.
This is exactly because the WHQP is solved using a WMPI in place of a simple MPI.
The benefit is that both weight matrices in the WMPI can be used for a specific purpose.
By setting them to $W_k$ and $M$, the first weight $W_k$ is used to have soft priorities between tasks within the same priority level $k$, while the second will result in an inertially-decoupled coordinates transformation in Section~\ref{sec:control}.
From an algorithmic point of view, the solution of a WHQP can be obtained with the exact same implementation used to solve a regular HQP.
Only two minor modifications are necessary: first, $Z_0$ needs to be initialized to $R_0^{-1}$ instead of the identity matrix; second, each $A_k$ and $b_k$ need to be pre-multiplied by $R_k$.

\section{Robot Model and Control Objective}
\label{sec:model}%

The considered fully-actuated robotic system is modeled by the nonlinear differential equations:
\begin{align}
	M(q) \ddot q + C(q, \dot q) \dot q + g(q) = \tau \ ,
	\label{eq:model}%
\end{align}%
where the state of the robot is given by generalized positions and velocities $q$, ${ \dot q \in \mathbb{R}^n }$, $n$ being the number of degrees of freedom (DoF).
The dynamic matrices are the symmetric and positive definite inertia matrix ${ M \in \mathbb{R}^{n \times n} }$, a Coriolis matrix ${ C \in \mathbb{R}^{n \times n} }$ satisfying the passivity property ${ \dot M = C + C^\top }$ and the gravity torque vector ${ g \in \mathbb{R}^n }$.
Finally, the control input ${ \tau \in \mathbb{R}^n }$ is realized through the motors of the robot.
Due to the use of rotational joints or prismatic joints with end-stops, the configuration space of the robot is bounded.
This implies the boundedness of the eigenvalues of $M$.

\subsection{Control objective}
Assume that the user provides a finite number of tasks and a relative level of strict priority among them.
Tasks at a same level of priority can further have a different weighting to specify their relative importance within the same priority level.
The tasks at level $k$ are expressed in coordinates as
\begin{subequations}
	\begin{align}
		& x_i = f_k(q) \\
		& \nu_k = J_k(q) \, \dot q \ ,
	\end{align}%
\end{subequations}%
for ${ k \in \br[c]{1,\ldots,r} }$, ${ f_k : \mathbb{R}^n \rightarrow \mathbb{R}^{m_k} }$ and symmetric positive-definite block-diagonal weighting matrix ${ W_k \in \mathbb{R}^{m_k \times m_k} }$.
The dimensions of the blocks of $W_k$ are the ones of the tasks with the same priority level $k$.
The tasks in each priority level are organized in a priority stack such that $x_k$ has a higher priority than $x_j$, for all ${ k<j }$.
The total task dimension $m$ is given by ${ m = \sum_{k=1}^r m_k }$.
According to the notation of this paper, the whole stack is ${ x = \begin{bmatrix} x_1^\top & \ldots & x_r^\top \end{bmatrix}^\top \in \mathbb{R}^m }$.
\begin{assumption}
\label{th:more_tasks}%
\textit{The total task dimension is not less than the DoF of the robot, i.e., ${ m \geq n }$.
Moreover, the projected stack $\bar J$ is always full rank, i.e., ${ \rank(\bar J) = n }$.}
\end{assumption}%
This means that tasks cannot be all singular at the same time.
The previous assumption can be easily satisfied by making sure that the stack always contains a configuration task, i.e., a task whose Jacobian matrix is the identity.
For example, by adding it at the end of the stack.

For the $k$-th task a desired trajectory ${ x^d_{k}(t) \in \mathbb{R}^{m_k} }$ is provided, together with the desired velocity coordinates and derivative, i.e., $\nu^d_{k}(t)$ and $\dot \nu^d_{k}(t)$.
As for the task coordinates, the stack of desired trajectories will be denoted by omitting the subscript, e.g., ${ x^d \in \mathbb{R}^m }$.
An equality task is fulfilled if ${x_k = x^d_k}$, while, without loss of generality, an inequality task is fulfilled if ${x_k > x^d_k}$.
%Similarly, for their derivatives.
If the task involves only velocity, the previous relationship for position coordinates are void.

Intuitively, the goal of the controller is to let the robot follow a trajectory in state space that results in task-space trajectories that are as close as possible to the desired ones, while respecting the priority levels of the tasks.

\section{Compliant Hierarchical Control Laws}
\label{sec:control}%

A first-order kinematic solution of the problem formulated in the previous section, is given by solving the WHQP
\begin{subequations}
\begin{align}
	& \underset{\dot q, w_1, \ldots, w_r}{\textrm{lex min}}	& & \hspace*{-0.7cm} \br[c]{ \frac{1}{2} w_1^\top W_1 w_1, \ldots, \frac{1}{2} w_r^\top W_r w_r } \\
	& \text{ s.t.}			& & \hspace*{-0.7cm} J_{k} \, \dot q \geq x_k^d + w_k
\end{align}%
	\label{eq:control_WHQP}%
\end{subequations}
but it is unclear at this stage how this can be used to compute a control torque $\tau$ leading to a compliant closed-loop system.

The design of the control law hinges on the computation of a change of coordinates obtained via a careful examination of the solution of \eqref{eq:control_WHQP}.
In fact, since in Alg. \ref{alg:has} the optimal solution is given by the active set only, then one can write
\begin{align}
	\dot q = \sum_{k \in \mathcal{A}} \bar A_k^+ v_k = \sum_{k \in \mathcal{A}} Z_{k-1} Y_k L_k^{-1} U_k^\top v_k
\end{align}%
where only the WMPI of the active tasks and corresponding $v_k$ play a role.
Moreover, it is worth to remember that $U_k$ is a base of the range space for the active tasks at level $k$.
This suggests that the mapping ${ \xi = F \dot q }$, with \begin{subequations}
\begin{align}
	& F^{-1} = \begin{bmatrix} \cdots & Z_{k-1} Y_k & \cdots \end{bmatrix} \\
	& F = F^{-\top} M
\end{align}%
\end{subequations}
for $k \in \mathcal{A}$, can be used to express the dynamic model of the robot in the active-tasks space and then design passivity-based compliant controllers in the new coordinates.
Using the new velocity coordinates $\xi$ and the new control input ${ f = F^{-\top} \big( \tau - g \big) }$ the model \eqref{eq:model} can be rewritten as
\begin{subequations}
\begin{align}
	& \dot q = F^{-1} \xi \\
	& \dot \xi + \Gamma \xi = f
\end{align}%
\end{subequations}
where notably the transformed inertia matrix is the identity, thanks to the definition of $F$ i.e., ${ F^{-\top} M F^{-1} = F F^{-1} = E }$, and the transformed Coriolis matrix
\begin{align}
	\Gamma = F^{-\top} \Big( C F^{-1} + M \der{}{t}(F^{-1}) \Big)
\end{align}
is skew-symmetric, in view of the passivity property \cite{Ott2008} of the model \eqref{eq:model}.
Clearly such a coordinates transformation is possible if and only if one can compute the time derivative of $F^{-1}$ to obtain $\Gamma$.
Appendix \ref{sec:cod} shows how this can be achieved by proposing an algorithm to differentiate the compact COD.
Notice that both $\xi$ and $f$ can be thought of as stacked vectors with each block corresponding to one of the active tasks.
Finally, Assumption \ref{th:more_tasks} guarantees that $F$ is always square and invertible.
However, the number of active tasks and the dimension of each of them is not fixed, rather it is automatically adjusted by the WHQP given the rank of each task in the stack and the priority levels.
Still, $F^{-1}$ always exists and it is differentiable.

Having the model in the new coordinates, it is easy to design a compliant control law as follows:
\begin{align}
	& \tau = g + F^\top \big( \dot \xi^r + \Gamma_d \xi^r + \Gamma_s \xi - D (\xi - \xi^r) \big)
	\label{eq:Slotine}%
\end{align}%
where we have a combination of feedforward, feedback and cancellation terms.
Given the error term ${ \tilde x = x - x^d }$, $\xi^r$ appearing in the control law is computed by stacking
\begin{align}
	& \xi_k^r = L_k^{-1} U_k^\top \big( \nu_k^d - \bar K \tilde x_k \big)
\end{align}%
for $k \in \mathcal{A}$.
Moreover, the matrices $\Gamma_d$ and $\Gamma_s$ are such that ${ \Gamma = \Gamma_d + \Gamma_s }$, with $\Gamma_d$ obtained by extracting the blocks on the diagonal, each of dimension of the corresponding active task.
Therefore, it follows that both $\Gamma_d$ and $\Gamma_s$ are skew-symmetric as well.
Lastly, ${ \bar K \in \mathbb{R}^{m \times m} }$ and ${ D \in \mathbb{R}^{n \times n} }$ are symmetric and positive definite gain matrices.
To avoid coupling between the tasks, these matrices are block-diagonal.

\begin{remark}
It is worth to highlight the connection with the passivity-based compliance control proposed in \cite{Ott2008}, since most of the works on this topic extend the results presented therein.
By redefining $\xi$ and $F^{-1}$ in a way that the factors $L_k^{-1}$ are moved from $\xi$ to $F^{-1}$ (denote them by $\xi^\prime$ and $F^\prime$), one obtains that $\xi^\prime$ contains directly the task velocity $\dot x_1$ if $J_1$ is full rank.
If one creates a stack in which this task is followed by a configuration task, then $F^\prime$ performs the same change of coordinates as ${ J_N }$ in \cite{Ott2008}.
Moreover, the block-diagonal structure of the transformed inertia matrix is given by ${ L_k^{-\top} L_k^{-1} }$.
In other words, while $\xi$ correspond to active tasks momenta, $\xi^\prime$ are active tasks velocities.
\end{remark}

\subsection{Summary and Discussion}
The implementation of the proposed controller can be thought of as separated in two steps: firstly a Weighted Hierarchical Active Search is performed to find the active set, given the stack of tasks that the robot needs to perform; secondly the matrices required to compute the control torques are extracted from the corresponding active set.
The search of the active set provides an automatic way to obtain task coordinates consistent with the priority levels.
The proposed controller is also robust to singularity because singular directions will be removed from the active set in favor of the next candidates in the stack.
This happens naturally and automatically since when a task becomes singular, it is reflected directly in the factors of the COD.

The term multiplying $F^\top$ in \eqref{eq:Slotine} is the new control input $f$ and therefore a generalization of the Slotine-Li controller \cite{Slotine}, except that special care is required for the position coordinates.
As a sketch, one can think of the configuration manifold as an embedded Riemannian submanifold in the higher dimensional linear space given by all the tasks.
Keeping in mind that the Riemannian gradient of a function is given by projecting a smooth extension defined in the embedding space onto the tangent space of the embedded submanifold \cite[Chapter 3]{Boumal}, one can connect the terms in the control law to projections in the space of active tasks to show local stability of the closed-loop system.
A detailed analysis is at this stage beyond the scope of the work.

Finally, external forces collocated with each task appear in the model upside down, so one need to measure the external interaction if a complete decoupling is wished \cite{Wu2023}.

\section{Validation}
\label{sec:validation}%

The control law is validated in simulation using a Franka Emika Panda robot.
This $7$-DoF manipulator is often used in research and therefore the result of this work could be relatively easily replicated by the interested readers.
Similarly, although a proprietary implementation of the WHQP has been used for the validation of the proposed approach, Section~\ref{sec:hqp} has clearly reported the two minor modifications that need to be performed to the HQP, for which open source implementations are available \cite{Escande2014}.

\begin{figure}
	\centering
	\includegraphics[width=0.7\columnwidth]{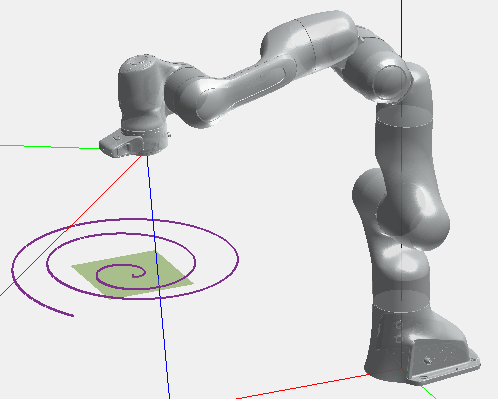}
	\caption{Simulated Franka Emika Panda robot with relevant frames, desired trajectory (purple spiral) and allowed region (green square).}
	\label{fig:intro}
\end{figure}

The robot is required to track the spiraling trajectory in Fig.~\ref{fig:intro} with its TCP, while changing its orientation in a non-trivial way consisting of a rotation around a moving axis (see Fig.~\ref{fig:htm_high}-\ref{fig:htm_avg} for the precise time evolution of the desired rotation matrix).
In addition, the square in Fig.~\ref{fig:intro} represents the admissible space in which the TCP is allowed to stay at all times.
As a result, the robot is only able to track the portion of the trajectory within the square and the scenario is used to exemplify how both equality and inequality tasks can be handled by the proposed approach.
Finally, the tracking of the trajectory for the position is weighted with the requirement of keeping the TCP at the center of the spiral, as an example of soft priorities.
The complete list of tasks in decreasing level of priority is:
\begin{itemize}
\item Priority 1: Orientation tracking.
\item Priority 2: TCP's $x$ and $y$ position coordinates within the square.
\item Priority 3: Tracking of the spiral weighted with regulation towards its center.
\item Priority 4: Keep initial joint configuration.
\end{itemize}
The weighting of the two tasks at level 3 is used to show how the approach can include both strict and soft priorities.
In particular, Fig.~\ref{fig:htm_high} and Fig.~\ref{fig:htm_low} show that with very skewed weights the resulting behavior tends towards the one obtained with strict priorities, while Fig.~\ref{fig:htm_avg} shows a compromise between the two tasks when the weights are equal.

\begin{figure}
	\includegraphics{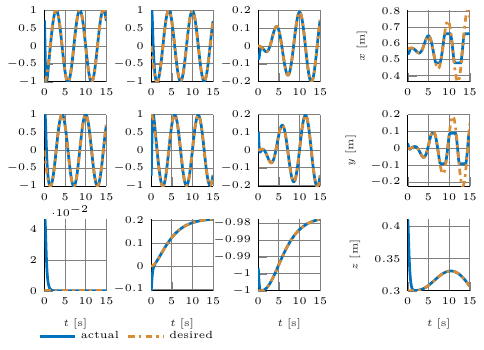}
	\caption{Non-trivial part of the homogeneous transformation matrices for the case of tracking weight much bigger than the regulation weight.
	The desired orientation in orange is perfectly tracked (actual values in blue), since the orientation is the first task in the stack.
	The position is tracked as long as the desired trajectory is within the admisable box.}
	\label{fig:htm_high}%
\end{figure}

\begin{figure}
	\centering
	\includegraphics[width=0.99\columnwidth]{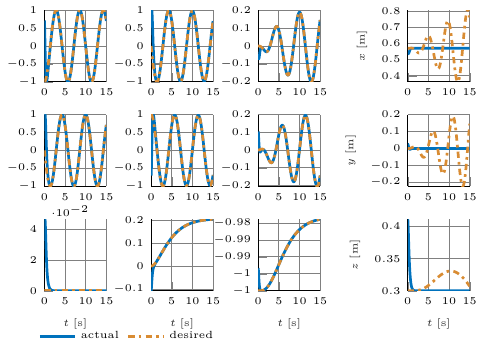}
	\caption{Non-trivial part of the homogeneous transformation matrices for the case of tracking weight much smaller than the regulation weight.
	While there is no change in the orientation, since it is the first task in the stack, the position of the TCP now converges to the center of the spiral.}
	\label{fig:htm_low}%
\end{figure}

\begin{figure}
	\centering
	\includegraphics[width=0.99\columnwidth]{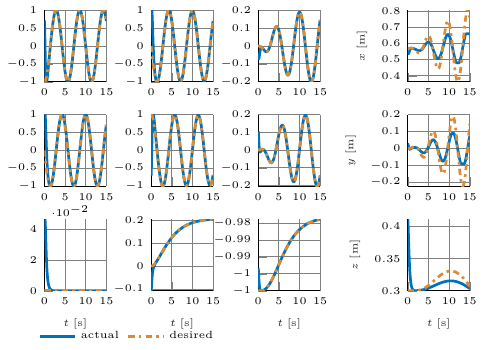}
	\caption{Non-trivial part of the homogeneous transformation matrices for the case of tracking weight equal to the regulation weight.
	Also in this case, there is no change for the orientation.
	Nevertheless, the position of the TCP is in between the tracking of the spiral and reaching its center.}
	\label{fig:htm_avg}%
\end{figure}

\paragraph*{Priority 1} The desired velocity is the desired angular velocity $\omega_d$, while the configuration error is given by the vector part of the quaternion error extracted from the actual and desired orientation matrices.
This corresponds to a reference term given by ${ \omega^r = \omega_d - K_1 e_o }$, where ${ K_1 = 12 E_{3 \times 3} }$ in SI units.
The motivation is that ${ \omega = \omega^r }$ guarantees that the desired orientation is tracked \cite{Siciliano}.

\paragraph*{Priority 2} This inequality tasks are implemented using reference terms of the type ${ p_i^r = - k_2 \big( p_i - b_i \big) }$, where each $p_i$ is a coordinate of the TCP in the horizontal plane, $b_i$ the corresponding box bound and ${ k_2 = 10 }$ in SI units.
Therefore, ${ K_2 = 10 E_{4 \times 4} }$.
The motivation is that ${ \dot p \geq p^r }$ guarantees that $p$ stays within the bounds $b$ \cite{Ames2019}.

\paragraph*{Priority 3} Given the desired spiral values $s^d$ and $\dot s^d$, with center at $c^d$, the reference term is obtained stacking ${ \dot s^d - K_3^\prime \big( p - s^d \big) }$ and ${ - K_3^\prime \big( p - c^d \big) }$, with ${ K_3^\prime = 5 E_{3 \times 3} }$ in SI units and ${ K_3 = \diag(K_3^\prime, K_3^\prime) }$.
The values of $W_3$ used in the 3 cases are ${ \diag(E_{3 \times 3}, 10^{-6} E_{3 \times 3}) }$, ${ \diag(10^{-6} E_{3 \times 3}, E_{3 \times 3}) }$ and ${ E_{6 \times 6} }$ respectively.

\paragraph*{Priority 3} The reference term is ${ - K_4 \big( q - q_0 \big) }$, with ${ q_0 }$ the initial configuration and ${ K_4 = 14 E_{7 \times 7} }$ in SI units.

Finally, $\bar K$ is the block-diagonal matrix obtained from the blocks $K_k$, ${ k \in \br[c]{1,2,3,4} }$, while $D$ is obtained by extracting the blocks corresponding to the active tasks from $\bar D$, which was set to be numerically equal to $\bar K$.

\section{Conclusion}
\label{sec:conclusion}%

Optimization-based hierarchical control has proven useful to automatically address multiple tasks with given priorities levels and to include both equality and inequality tasks.
At the same time, passivity-based controllers are very popular to create a compliant behavior of the system without requiring measurements of the external torques.
Nevertheless, since the design of passivity-based controllers uses a diffeomorphism between joint-space velocities and task velocities, restrictive assumptions are usually made to combine hierarchical control and passivity-based control, e.g., only equality tasks are allowed.
This work has shown that these assumptions can be lifted by using the COD of the active task set of a Weighted Hierarchical Quadratic Problem, which finds an optimal solution given all the tasks and corresponding strict and soft priorities.
The result is an algorithm that allows the user to freely provide as many equality and inequality tasks as required, without any particular attention to guarantee a priory that the tasks will not become singular during execution.
The occurrence of singularities is automatically handled in the Weighted Hierarchical Active Search by using the DoFs released by a singular task to execute the next task in the stack according to the priority levels.

\appendices
\section{Optimality conditions}
\label{sec:kkt}%

The optimal solution of \eqref{eq:WHQP_k} can be computed using the Lagrangian method.
This leads to the system of equations:
\begin{align}
	& W_k w_k = \lambda_k \\
	& \mathbf{\bar A}^{\top}_{k-1} \, \mathbf{\lambda}_{k-1} + A^\top_k \, \lambda_k = 0 \\
	& \mathbf{\bar A}_{k-1} \, x^{(k)} = \mathbf{\bar A}_{k-1} x^{(k-1)} \\
	& A_k \, x^{(k)} = b_k + w_k
\end{align}%
Assume as induction step that ${ x^{(k-1)} = \mathbf{\bar A}^{+}_{k-1} \, \mathbf{v}_{k-1} }$, with ${ \mathbf{v}_1 = b_1 }$, then it follows that to keep satisfying the previous constraints, the solution at the current step is written as
\begin{align}
	& x^{(k)} = \mathbf{\bar A}^{+}_{k-1} \, \mathbf{v}_{k-1} + P_{k-1} x_p
\end{align}%
where $x_p$ can be freely chosen to best satisfy the additional constraints of the $k$-th level.
That is
\begin{align}
	& A_k \, \mathbf{\bar A}^{+}_{k-1} \, \mathbf{v}_{k-1} + \bar A_k x_p = b_k + w_k
\end{align}%
and therefore defining ${ v_k = b_k - A_k \, \mathbf{\bar A}^{+}_{k-1} \, \mathbf{v}_{k-1} }$ and setting ${ x_p = \bar A^+_k \, v_k }$, the primal solutions are
\begin{align}
	& x^{(k)} = \mathbf{\bar A}^{+}_k \, \mathbf{v}_k \\
	& w_k = - \big( E - \bar A_k \bar A^+_k \big) v_k
\end{align}%
where $w_k$ is computed from $ - w_k = b_k - A_k x^{(k)} = b_k - A_k \mathbf{\bar A}^{+}_{k-1} \, \mathbf{v}_{k-1} - \bar A_k \bar A^+_k \, v_k = v_k - \bar A_k \bar A^+_k \, v_k $.

Last the dual variables $\mathbf{\lambda}_{k-1}$ need to be computed by solving ${ \mathbf{\bar A}^{\top}_{k-1} \, \mathbf{\lambda}_{k-1} = - A^\top_k w_k }$, leading to
\begin{align}
	& \mathbf{\lambda}_{k-1} = - \mathbf{\bar A}^{+ \top}_{k-1} A^\top_k \lambda_k
\end{align}%
which can be proved to be the solution by checking whether ${ \mathbf{\bar A}^{\top}_{k-1} \mathbf{\bar A}^{+ \top}_{k-1} A^\top_k = A^\top_k }$.
This is indeed the case because
\begin{align}
	A_k \mathbf{\bar A}^{+}_{k-1} \mathbf{\bar A}_{k-1} &= A_k \Big( E - \sum_{i=k}^r \bar A^+_i \bar A_i \Big) = A_k
\end{align}%
since ${ A_k \begin{bmatrix} \bar A^+_k & \ldots & \bar A^+_r \end{bmatrix} = 0 }$ due to \eqref{eq:bJpbJ}, see \cite{Garofalo2020}.

\section{Projected Stack via COD}
\label{sec:compact}%

\begin{lemma}
The WMPI $A^{\dagger_{_{W_1,W_0}}}$ can be computed as ${ W_0^{-1} A^\top R_1^\top (R_1 A W_0^{-1} A^\top R_1^\top )^\dagger R_1^\top }$.
\end{lemma}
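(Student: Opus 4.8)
The plan is to reduce the weighted statement to the familiar unweighted normal‑equation identity for the ordinary Moore--Penrose inverse and then conjugate by the Cholesky factors. The only ingredient I take from the preliminaries is that, with $W_1 = R_1^\top R_1$, $W_0 = R_0^\top R_0$ and $\hat A = R_1 A R_0^{-1}$, one has $A^{\dagger_{_{W_1,W_0}}} = R_0^{-1} \hat A^\dagger R_1$.

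First I would invoke the standard identity $\hat A^\dagger = \hat A^\top (\hat A \hat A^\top)^\dagger$, which holds for any real matrix $\hat A$ irrespective of its rank. If a self-contained argument is wanted, this follows by checking the four Penrose conditions for the candidate $\hat A^\top (\hat A \hat A^\top)^\dagger$; the only non-routine one is $\hat A \hat A^\top (\hat A \hat A^\top)^\dagger \hat A = \hat A$, which holds because $\hat A \hat A^\top (\hat A \hat A^\top)^\dagger$ is the orthogonal projector onto the column space of $\hat A \hat A^\top$, and this column space coincides with that of $\hat A$, so the projector fixes the columns of $\hat A$.

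Next I would substitute the Cholesky factorizations: $\hat A^\top = R_0^{-\top} A^\top R_1^\top$ and $\hat A \hat A^\top = R_1 A R_0^{-1} R_0^{-\top} A^\top R_1^\top = R_1 A W_0^{-1} A^\top R_1^\top$, where I use $R_0^{-1} R_0^{-\top} = (R_0^\top R_0)^{-1} = W_0^{-1}$. Inserting these into $A^{\dagger_{_{W_1,W_0}}} = R_0^{-1} \hat A^\top (\hat A \hat A^\top)^\dagger R_1$ and collecting the leading factor $R_0^{-1} R_0^{-\top} = W_0^{-1}$ recovers the expression claimed in the Lemma.

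The step I expect to be the main obstacle is not the algebra — which amounts to a couple of substitutions — but making the rank-deficient case airtight: since $A$ is not assumed to have full rank, $\hat A \hat A^\top$, equivalently $R_1 A W_0^{-1} A^\top R_1^\top$, may be singular, which is precisely why a pseudoinverse appears rather than an ordinary inverse, and why the projector identity above has to be stated for general rank instead of appealing to $(\hat A \hat A^\top)^{-1}$. An alternative that sidesteps the MPI identity altogether is to insert the candidate $X$ directly into the four defining equations \eqref{eq:def_generalized_inverse} and verify them one by one; there the delicate equation is again $A X A = A$, which, after the same substitutions, reduces to $\hat A \hat A^\top (\hat A \hat A^\top)^\dagger \hat A = \hat A$.
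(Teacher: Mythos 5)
Your proof follows exactly the paper's own argument: the identity $\hat A^\dagger = \hat A^\top (\hat A \hat A^\top)^\dagger$ combined with $A^{\dagger_{_{W_1,W_0}}} = R_0^{-1} \hat A^\dagger R_1$, with the added (and welcome) justification of the first identity in the rank-deficient case, which the paper simply asserts. One point worth flagging: your substitution correctly yields a trailing factor $R_1$, not $R_1^\top$; the $R_1^\top$ at the end of the lemma as printed (and in the paper's one-line proof, which also writes $R_0^{-1}\hat A^\dagger R_1^\top$ in conflict with the preliminaries) appears to be a typo, since $(\hat A \hat A^\top)^\dagger R_1^\top \neq (\hat A \hat A^\top)^\dagger R_1$ in general.
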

\begin{proof}
The result follows directly from the identity ${ A^\dagger = A^\top (A A^\top)^\dagger }$ and ${ A^{\dagger_{_{W_1,W_0}}} = R_0^{-1} \hat A^\dagger R_1^\top }$ with ${ \hat A = R_1 A R_0^{-1} }$.
\end{proof}

Given the previous lemma, one can show that each of the nullspace projectors $P_k$ can be expressed as
\begin{align}
	P_k = Z_k Z^\top_k M
\end{align}%
with ${ Z_k^\top M Z_k = E }$ and the advantage that while all $P_k$ are square and with dimension $n$, $Z_k$ has as many columns as the remaining DoFs at level $k$.
Therefore, $Z_k$ will be eventually empty when all the DoFs have been used up by the tasks.
This can be proved by induction, as it is certainly true for $P_0$ with ${ Z_0 = R_0^{-1} }$.
Assume the condition true for $P_k$, then
\begin{align}
	& \bar A_{k+1} = R_{k+1}^{-1} \hat A_{k+1} Z^\top_k M \\
	& \hat A_{k+1} = R_{k+1} A_{k+1} Z_k
\end{align}%
from which
\begin{align}
\begin{split}
	\bar A_{k+1}^+ &= M^{-1} \bar A_{k+1}^\top R_{k+1}^{\top} (R_{k+1} \bar A_{k+1} M^{-1} \bar A_{k+1}^\top R_{k+1}^{\top})^\dagger R_{k+1} \\
	&= Z_k \hat A^\top_{k+1} \big( \hat A_{k+1} \hat A_{k+1}^\top \big)^\dagger R_{k+1} = Z_k \hat A_{k+1}^\dagger R_{k+1}
\end{split}%
\end{align}%
and therefore
\begin{align}
	& N_{k+1} = E - \bar A_{k+1}^+ \bar A_{k+1} = E - Z_k \hat A_{k+1}^\dagger \hat A_{k+1} Z_k^\top M \\
	\begin{split}
	& P_{k+1} = P_k N_{k+1} = Z_k Z_k^\top M - Z_k \hat A_{k+1}^\dagger \hat A_{k+1} \hat Z_k M \\
	& \phantom{P_{k+1}} = Z_k \big( E - \hat A_{k+1}^\dagger \hat A_{k+1} \big) Z_k^\top M
	\end{split}
\end{align}%
The proof is completed by computing the MPI of $\hat A_{k+1}$ via COD, which additionally clarifies how to compute each $Z_k$.
For each matrix, we have
\begin{align}
	& \hat A_k = \begin{bmatrix} V_k & U_k \end{bmatrix} \begin{bmatrix} 0 & 0 \\ L_k & 0 \end{bmatrix} \begin{bmatrix} Y_k & \tilde Z_k \end{bmatrix}^\top = U_k L_k Y_k^\top
\end{align}%
where $\begin{bmatrix} V_k & U_k \end{bmatrix}$ and $\begin{bmatrix} Y_k & \tilde Z_k \end{bmatrix}$ are two orthogonal matrices, $U_k$ being a basis of the range space of $\hat A_k$, while $\tilde Z_k$ is a base of its kernel.
Finally, $L_k$ is a lower triangular matrix with strictly nonzero diagonal.
If $\hat A_k$ is full-row rank, $U_k$ is the identity and $V_k$ is empty.
Using the COD,
\begin{align}
	& \hat A_{k+1}^\dagger = Y_{k+1} L_{k+1}^{-1} U_{k+1}^\top
	\\
	& E - \hat A_{k+1}^\dagger \hat A_{k+1} = E - Y_{k+1} Y_{k+1}^\top = \tilde Z_{k+1} \tilde Z_{k+1}^\top
\end{align}%
therefore defining $Z_k$ via the recursion \eqref{eq:Z_recursion}, one gets
\begin{align}
	P_{k+1} &= Z_{k} \tilde Z_{k+1} \tilde Z_{k+1}^\top Z_k^\top M = Z_{k+1} Z_{k+1}^\top M
\end{align}%
and the proof by induction is concluded.
Moreover, using the COD of $\hat A_k$, the following holds
\begin{align}
& \bar A_k = R_k^{-1} U_k L_k Y_k^\top Z_{k-1}^\top M \\
& \bar A_k^+ = Z_{k-1} Y_k L_k^{-1} U_k^\top R_k \ .
\end{align}

\section{COD Differentiation}
\label{sec:cod}%

The COD plays a central role both from a numerical point of view for the computation of the WMPI and from a theoretical point of view for proving the existence of the coordinate transformation that embeds the configuration manifold in the hierarchical task space.

Due to the non-uniqueness of the COD, it is not possible to compute its differentiation, unless one focuses on the compact COD only.
Luckily, this is exactly what is needed to obtain the hierarchical active set and correspondent coordinate transformation.
Assume without loss of generality that ${ A \in \mathbb{R}^{m \times n} }$, with ${ m \geq n }$ and ${ r_A = \rank(A) \leq n }$.
Moreover, the columns of $A$ have been permuted so that the QR decomposition is sorted, i.e., ${ A = A^\prime \Pi }$, with $\Pi$ a permutation matrix, therefore whose entries are either zero or one.
From partitioning the QR decomposition of ${ A = Q R }$ follows:
\begin{align}
	& \begin{bmatrix} A_1 & A_2 \end{bmatrix} = \begin{bmatrix} Q_1 & Q_2 \end{bmatrix} \begin{bmatrix} R_1 & R_2 \\ O & O \end{bmatrix} \\
	& \dot A_1 = \dot Q_1 R_1 + Q_1 \dot R_1 \\
	& Q^\top \dot A_1 R_1^{-1} = \begin{bmatrix} Q_1^\top \dot Q_1 \\ Q_2^\top \dot Q_1 \end{bmatrix} + \begin{bmatrix} \dot R_1 R_1^{-1} \\ O \end{bmatrix}
\end{align}%
with ${ R_1 \in \mathbb{R}^{r_A \times r_A} }$ and the others of obvious dimensions.
Since $Q$ is orthogonal, ${ Q_1^\top \dot Q_1 }$ is skew-symmetric, while ${\dot R_1 R_1^{-1}}$ is upper-triangular.
Therefore, one can reconstruct ${ Q_1^\top \dot Q_1 }$ from the lower-triangular part of the first $r_A$ rows of ${ Q^\top \dot A_1 R_1^{-1} }$, while the last $m-r_A$ rows are exactly ${ Q_2^\top \dot Q_1 }$.
By stacking the two blocks obtained in this way and pre-multiplying them by $Q$, one obtains $\dot Q_1$.

The second step of the procedure starts by using Givens rotations to zero the entries in $R_2$ via an orthogonal matrix $U$, so that ${ R U = P }$ and ${ \begin{bmatrix} R_1 & R_2 \end{bmatrix} U = \begin{bmatrix} P_1 & O \end{bmatrix} }$
\begin{align}
	& A = Q P U^\top = Q_1 P_1 U_1^\top \\
	\begin{split}
	& P_1^{-1} Q_1^\top \dot A U = \big( P_1^{-1} Q_1^\top \dot Q_1 P_1 + P_1^{-1} \dot P_1 \big) \begin{bmatrix} E & O \end{bmatrix} \\
	& \phantom{P_1^{-1} Q_1^\top \dot A U = } + \begin{bmatrix} \dot U_1^\top U_1 & \dot U_1^\top U_2 \end{bmatrix} \\
	\end{split}
	\label{eq:dCOD}%
\end{align}%
Since ${ Q_1^\top \dot Q_1 }$ was computed in the previous step, while as before ${ \dot U_1^\top U_1 }$ is skew-symmetric and ${ P_1^{-1} \dot P_1 }$ is upper-triangular, one can reconstruct ${ \dot U_1^\top U_1 }$ from the lower-triangular part of the first $r_A$ columns of ${ P_1^{-1} Q_1^\top \dot A U }$, while the last $n-r_A$ are exactly ${ \dot U_1^\top U_2 }$.
By stacking ${ U_1^\top \dot U_1 }$, ${ U_2^\top \dot U_1 }$ and pre-multiplying them by $U$, one obtains $\dot U_1$.
Finally, since ${ P_1^{-1} \dot P_1 }$ is the only term left unknown in \eqref{eq:dCOD}, one can isolate it and pre-multiplying by $P_1$ to obtain $\dot P_1$.

\section*{Acknowledgment}
The author would like to thank Mikael Norrlof and Giacomo Spampinato for their suggestions on the draft.

\bibliographystyle{IEEEtran}
\bibliography{root}
\balance

\end{document}